\documentclass[journal]{IEEEtran}
\usepackage{amsmath,amsfonts}
\usepackage{algorithmic}
\usepackage{algorithm}
\usepackage{array}
\usepackage[caption=false,font=normalsize,labelfont=sf,textfont=sf]{subfig}
\usepackage{textcomp}
\usepackage{stfloats}
\usepackage{url}
\usepackage{verbatim}
\usepackage{graphicx}
\usepackage{cite}
\newcommand{\dynfabrics}{\textit{Dynamic Fabrics}}
\newcommand{\ours}{\textit{AIKIDO}}
\newcommand{\figref}[1]{Figure #1}
\newcommand{\secref}[1]{Section #1}
\newcommand{\tblref}[1]{Table #1}
\newcommand{\algref}[1]{Algorithm #1}
\usepackage{bbold}            
\usepackage{multirow}         
\usepackage{svg}              
\usepackage{orcidlink}        
\usepackage{diagbox}          
\usepackage{amssymb}          
\usepackage{amsthm}
\theoremstyle{definition}
\newtheorem{theorem}{Theorem}
\newtheorem{lemma}{Lemma}
\newtheorem{proposition}{Proposition}

\newtheorem{remark}{Remark}
\newtheorem{assumption}{Assumption}
\newtheorem{definition}{Definition}
\newcommand{\assuref}[1]{Assumption #1}
\newcommand{\propref}[1]{Proposition #1}

\newcommand{\lemmref}[1]{Lemma #1}
\newcommand{\remaref}[1]{Remark #1}

\begin{document}

\title{A Task-Space Receding Horizon Controller for Fast Collision Avoidance}

\author{Mattia Penzotti, Marco Controzzi
\thanks{The authors are with the Biorobotics Institute and the Department of Excellence in Robotics and AI, Scuola Superiore Sant'Anna, Pisa, Italy. \texttt{\{mattia.penzotti, marco.controzzi\}@santannapisa.it}}
}


\maketitle

\begin{abstract}
Real-time collision avoidance for robotic manipulators requires fast reactions to unexpected obstacle motion and enough lookahead to avoid becoming trapped by near-future constraints. Full model predictive control can provide this foresight, but its online cost may grow quickly with horizon length, model fidelity, and the number of active geometric constraints. Conversely, horizon-free reactive methods are computationally efficient but can be short-sighted in dynamic clutter. This paper presents a task-space receding-horizon controller that uses a short contact-consistent rollout to generate a terminal kinematic reference satisfying internal non-penetration constraints, then computes only the first input of a smooth minimum-acceleration transition toward that reference. Starting from a closed-loop inverse-kinematics regulation law, the rollout is performed with an iterative dynamics solver operating on inflated convex robot and obstacle geometries, so that robot--obstacle contacts, dynamic obstacle motion, and self-collision interactions can shape the terminal reference without requiring full constrained trajectory optimization. We analyze the contact-inactive closed loop and show local exponential task-space regulation under standard regularity assumptions. For contacts activated inside the rollout, we characterize the corresponding discrete updates and bound the effect of moving obstacles on regular operating sets. Simulations on a 40-degree-of-freedom (DOF) multi-chain system show that intermediate horizons balance anticipation, responsiveness, and computational cost. Hardware experiments on a 6-DOF platform demonstrate consistent sim-to-real behavior without accurate inertial parameter estimation, and comparisons against dynamic optimization fabrics and model predictive control (MPC) baselines show improved success rates in dynamic clutter while preserving solve times compatible with real-time execution in the tested regimes.

The open-source implementation can be found at \url{https://github.com/sssa-human-robot-interaction-lab/sim}
\end{abstract}


\section{Introduction}
Real-time collision avoidance for robotic manipulators requires a controller to react quickly to unexpected obstacle motion while also anticipating near-future interactions before clearances become critical. This requirement is particularly demanding in cluttered dynamic environments, where high-dimensional robot geometry, self-collision constraints, and moving obstacles must be handled within tight control-cycle budgets. Methods with explicit lookahead, such as model predictive control (MPC), provide a principled framework for reasoning about future constraints and stability under appropriate modeling and feasibility assumptions~\cite{dos2024set,palmieri2021motion}. However, their online computational cost can increase rapidly with model complexity, horizon length, and the number of active geometric constraints~\cite{haffemayer2025collision,santos2023nonlinear}. Horizon-free reactive planners, including local optimization, barrier-based filters, velocity dampers, and geometry-shaped vector fields, often scale more favorably but rely primarily on instantaneous geometry, which can make them short-sighted under fast obstacle motion, narrow passages, or dense clutter~\cite{ratliff2021generalized,van2022geometric,rakita2021collisionik,morton2025safe,spahn2023dynamic}.

The central idea of this paper is that useful anticipation does not necessarily require optimizing an entire constrained control sequence. Instead, a short-horizon forward rollout infers a contact-consistent near-future kinematic state that accounts for inflated-geometry non-penetration constraints. This terminal state allows predicted obstacle motion and self-collision interactions to influence the current command, while execution is reduced to a smooth transition from the measured state toward the predicted terminal configuration.

We instantiate this idea in \ours{}, a task-space receding-horizon controller, for manipulator collision avoidance. Starting from a closed-loop inverse-kinematics regulation law, \ours{} performs a short rollout in maximal coordinates using an iterative dynamics solver with non-penetration constraints between inflated convex robot and obstacle geometries. The rollout terminal state is mapped back to joint coordinates and used as the boundary condition of a minimum-acceleration optimal-control problem. Only the first acceleration input is applied, yielding a receding-horizon controller that combines local feedback, contact-consistent lookahead, and smooth executable joint commands. In this sense, \ours{} occupies an intermediate regime: it is not a full nonlinear MPC method, because it does not optimize a constrained state--input trajectory online, and it is not purely reactive, because contacts encountered during the rollout shape the terminal reference supplied to the controller.

Before detailing the contributions, we use the following abbreviations for the three controller layers: closed-loop inverse kinematics (CLIK), iterative-dynamics solver (IDS), and optimal-control (OC). The main contributions of this work are as follows:
\begin{enumerate}
    \item \textbf{A contact-consistent receding-horizon controller.}
    We introduce \ours{}, a task-space controller that combines closed-loop inverse kinematics, iterative-dynamics rollouts, and minimum-acceleration receding-horizon execution. The controller uses the contact solver to generate a short-horizon terminal reference shaped by dynamic obstacles and self-collision interactions, without solving a full constrained nonlinear MPC problem online.
    \item \textbf{A nominal stability and rollout-energy analysis.}
    We show that the contact-inactive CLIK--OC flow yields exponentially stable task-space regulation under standard local regularity assumptions. For contacts activated inside the IDS rollout, we characterize static-obstacle and self-collision updates as dissipative in the normal contact coordinates and bound dynamic-obstacle energy injection on regular operating sets. The analysis supports a practical robustness interpretation of the rollout mechanism without constituting a global safety certificate.
    \item \textbf{A broad empirical validation.}
    We evaluate \ours{} in high-dimensional simulation, hardware experiments, and head-to-head comparisons against representative reactive and predictive baselines\footnote{Multimedia available at \url{https://penzottimattia.github.io/aikido-frame-viewer/}}. The results show that \ours{} maintains real-time solve times while achieving strong collision-avoidance performance in dynamic clutter.
\end{enumerate}
The remainder of the paper is organized as follows. Section~II reviews predictive, reactive, geometric, and complementarity-based approaches to manipulator collision avoidance and positions \ours{} relative to each family. Section~III states the problem setting and scope. Section~IV introduces the contact-consistent rollout preliminaries. Section~V presents the \ours{} controller architecture, execution law, and analysis. Section~VI evaluates hyperparameter effects in high-dimensional simulation. Section~VII reports hardware results. Section~VIII compares \ours{} against representative predictive and reactive baselines. Section~IX discusses limitations and future work, and Section~X concludes the paper.
\section{Related Work}
\subsection{Predictive Collision Avoidance}
Predictive controllers address collision avoidance by optimizing future state and input trajectories subject to task and safety constraints. MPC is therefore a natural framework for robotic manipulation in cluttered environments, since it can reason over a receding horizon and encode obstacle-avoidance requirements directly in the optimization problem~\cite{dos2024set,santos2023nonlinear}. Recent work has extended this idea to whole-body and manipulator controllers that include non-penetration constraints, velocity-dependent safety margins, or dynamic-obstacle interactions inside the MPC loop~\cite{heins2023keep,haffemayer2025collision}. These methods explicitly account for future interactions, but their computational cost depends strongly on horizon length, model fidelity, and the number of active geometric constraints. In dense dynamic scenes, practical implementations often rely on simplified collision models, shortened horizons, or carefully tuned costs and constraints~\cite{santos2023nonlinear,heins2025robust}.

\subsection{Instantaneous Safety Filters}
A second family of methods enforces safety through instantaneous constraints, including velocity dampers and control barrier functions (CBFs). Velocity-damper formulations couple clearance with relative velocity and can be embedded as hard constraints inside control or optimization loops~\cite{haffemayer2025collision}. CBF-based controllers provide forward-invariance properties under appropriate assumptions and can be combined with operational-space control to obtain high-rate safety filters for manipulation~\cite{morton2025safe}. Recent approaches also combine CBF constraints with MPC, allowing tracking and safety objectives to be co-designed over a prediction horizon~\cite{liu2025flexible}. These mechanisms provide strong local safety structure, but can become conservative or myopic when obstacles move quickly, when many constraints activate simultaneously, or when the robot must move before the current state becomes unsafe.

\subsection{Geometric and Horizon-Free Reactive Planners}
Geometric fabrics and related horizon-free planners generate motion by shaping local vector fields or solving per-instant optimization problems that combine task tracking, obstacle avoidance, smoothness, and other objectives~\cite{ratliff2021generalized,van2022geometric}. These methods can achieve fast control cycles because they do not optimize over a future horizon. Extensions such as dynamic optimization fabrics incorporate obstacle motion and geometry-aware behavior with low computational overhead~\cite{spahn2023dynamic,spahn2023autotuning}. Other local optimization approaches, such as per-instant inverse-kinematics formulations, similarly exploit instantaneous geometric information to generate collision-avoiding motions~\cite{rakita2021collisionik}. However, because these methods are governed by local geometry, metric design, or weighted objective combinations, their performance may degrade when task attraction and obstacle repulsion become strongly conflicting~\cite{duhe2021contributions,wilde2024scalarizing}.

\subsection{Contact- and Complementarity-Based Motion Generation}
Contact and complementarity formulations provide a direct way to encode unilateral non-penetration constraints. Linear complementarity problem (LCP) formulations and iterative contact solvers are standard tools in rigid-body simulation, where they resolve contacts through impulse-momentum laws and non-penetration constraints~\cite{catto2005iterative,bender2014interactive,featherstone2014rigid,Carpentier-RSS-24}. Such formulations are appealing for robot collision avoidance because they model unilateral constraints directly, rather than approximating them through smooth potentials or distance penalties. Recent work has also explored complementarity-based reactive planning for whole-body collision avoidance, formulating collision-free motion generation through LCP-style constraints that explicitly account for robot geometry and rigid-body non-penetration~\cite{yao2025synthesis}.

\subsection{Paper positioning}
Relative to these families, \ours{} uses contact-consistent dynamics in a different role. Compared with nonlinear MPC, it does not optimize a constrained state--input trajectory online. Compared with instantaneous filters and geometric fabrics, it is not limited to the current constraint geometry. Compared with contact solvers used for simulation or instantaneous resolution, the IDS rollout generates a short-horizon terminal reference, while the OC layer separately computes the executable first acceleration. This use of an iterative contact solver as a terminal-reference generator is the distinguishing element of the proposed architecture, and preserves a degree of predictive behavior while keeping the online problem substantially simpler than full constrained trajectory optimization.
\section{Problem Setting and Scope}
\label{sec:problem}
\subsection{System and Inputs}
We consider a low-level, closed-loop manipulation controller operating at a fixed servo period $\Delta t$. At each cycle, the controller receives the measured joint state $(\mathbf{q},\dot{\mathbf{q}})$, a desired task-space target $\boldsymbol{x}^d$, convex or convex-decomposed inflated geometries for robot links and obstacles, and predicted obstacle velocities over a short horizon $T$. The controller outputs a joint-level acceleration command, which is integrated into the commanded joint velocity sent to the robot interface.

Let $g$ denote the forward-kinematics map from joint positions to task-space coordinates, so that $\boldsymbol{x}=g(\mathbf{q})$. We define the task-space regulation error as $\boldsymbol{\phi}=\boldsymbol{x}^d-g(\mathbf{q})$. Collision checking and rollout updates are performed on inflated robot and obstacle geometries, where inflation by the clearance value $\rho_0$ turns near-contact configurations into active non-penetration constraints inside the internal prediction model.

\subsection{Objective and Scope}
The objective is local task-space regulation with online collision avoidance in dynamic clutter. The controller should reduce the task-space error $\boldsymbol{\phi}$ while using the inflated geometries to discourage robot--obstacle and non-adjacent robot--robot penetrations through the internal rollout. The intended operating regime is low-level servo control, where decisions must be computed within tight control-cycle budgets and updated continuously from measured feedback.

We do not address global path planning, perception, obstacle-state estimation, or formal forward-invariance certification for the executed continuous trajectory. Non-penetration constraints are enforced inside the internal rollout model, whereas the robot executes the first input of the OC transition toward the rollout terminal state. The guarantees and empirical claims should therefore be interpreted as local control and rollout-consistency results, not as global planning or hard safety certificates.

\subsection{Design Requirements}
Within this scope, the controller is designed around three practical requirements. First, it should preserve local task-space regulation when no rollout contacts are active. Second, it should allow future obstacle motion to affect the current command over a short prediction horizon. It should also handle robot--obstacle and self-collision interactions through inflated-geometry non-penetration constraints, to account for uncertain/approximated geometries. Third, it should output smooth joint-level commands suitable for high-rate execution on physical robots. These requirements motivate the three-layer architecture developed in Section~\ref{sec:aikido}.
\section{Preliminaries}
\label{sec:preliminaries}
This section introduces the contact-consistent rollout machinery used by \ours{}. We relate general concepts of iterative dynamics~\cite{catto2005iterative,bender2014interactive,featherstone2014rigid} to the collision-avoidance setting considered here. Unlike standard formulations used in many iterative engines~\cite{Carpentier-RSS-24}, we express the rigid-body dynamics quantities in robot variables while retaining the notation of iterative contact dynamics~\cite{catto2005iterative}. The rollout is expressed in maximal coordinates, which allows rigid-body contact updates to be applied directly to link poses and velocities before mapping the terminal state back to joint space.

We consider an articulated serial chain comprising $n$ links. Given its state $\mathbf{q}(t)$ and $\dot{\mathbf{q}}(t)$ in reduced coordinates, we define $X(\mathbf{q}(t))$ and $V(\dot{\mathbf{q}}(t))$ as the spatial poses and velocities of the centers of mass, both represented as $6n$-by-$1$ maximal-coordinate vectors. The center-of-mass positions $X(t)$---for simplicity, we omit the dependency on the joint state---are computed via forward kinematics, while the velocities $V(t)$ are derived via Jacobian matrices.
The rollout model computes a contact-consistent state $X^+$ and $V^+$ at time $t+T$, which, within one IDS substep or over the horizon after repeated substeps, is expressed using semi-implicit Euler integration as:
\begin{equation}
    X^+ = X + V^+ T
    \label{eq:euler}
\end{equation}
where superscripts are used to drop explicit dependency on time.

\subsection{Design Choices and Assumptions}
Our design choices for the iterative dynamic solver are intended to enable contact-consistent lookahead predictions based solely on known geometries and constraints (i.e., robot joints and non-penetration constraints). In particular, by working with maximal coordinates, we can directly apply established rigid-body contact dynamics models~\cite{catto2005iterative,bender2014interactive,featherstone2014rigid,Carpentier-RSS-24} without the need to reformulate them for joint-space variables. This approach simplifies collision detection, distance computation, and the enforcement of contact constraints such as non-penetration and impulse-momentum. Moreover, we further streamline those models under additional assumptions on the collision avoidance problem.

\begin{assumption}[Obstacle updates]
\label{ass:obst_vels} 
In the proposed formulation, velocity updates for obstacles are never considered. This choice reflects two design criteria: (i) estimating obstacle states lies beyond the scope of a low-level controller in practical deployments, and (ii) the controller’s primary responsibility is to achieve collision avoidance rather than influence obstacle motion. Consequently, obstacle velocities are treated as fixed input parameters within the contact model, i.e., constant during the lookahead horizon, compelling the robot to adapt its dynamics to those of moving obstacles—never the reverse—by design.
\end{assumption}
    
\begin{assumption}[Absence of restitution]
The absence of restitution prevents robot links from bouncing upon contact with static obstacles, ensuring that they precisely maintain the safety distance defined by a \textit{clearance} value $\rho_0$.
\end{assumption}

\begin{assumption}[Inertial properties]
\label{ass:unit_mass} 
For the inertial properties of the robot links, we assume unit masses and identity inertia tensors. \assuref{\ref{ass:obst_vels}} is equivalent to treating all obstacles as having infinite mass, which removes the need for precise estimation of robot parameters. As in many classical closed-loop regulation schemes, we rely on direct measured feedback to compensate for possible tracking errors of low-level control inputs (e.g., joint velocities).
\end{assumption}

\subsection{Inflated-Geometry Contact Constraints}
\label{sec:contacts}

In the following, we review contact models that are particularly relevant to the present work. We consider obstacles to be static unless they have a non-zero center-of-mass velocity, in which case we refer to them as dynamic obstacles.  Collision pairs are restricted to interactions involving either two non-adjacent links of the articulated system (self-collisions) or a link and an obstacle.  First, we address the general case of resolving penetrations, which enables efficient handling of collisions with static obstacles. Next, we discuss the normal velocity constraint, which is fundamental for effectively avoiding collisions with dynamic obstacles. Finally, we cover the impulse-momentum law, which pertains to cases of self-collisions. We recall that, for both static and dynamic obstacles, center-of-mass velocities are not updated within the contact models. Throughout the paper, \textit{contact} denotes activation of a non-penetration constraint between inflated geometries; thus it does not necessarily imply physical contact between the robot and the obstacle.

\subsubsection{Static obstacle contacts and penetrations}
For potential collision pairs, we can identify the distance normal $\mathbf{n}$ through the Separating Axis Theorem (SAT) and SAT-based algorithms \cite{jimenez20013d}. The set $\mathcal{N} := \{\mathbf{n}_1,\ldots,\mathbf{n}_s\}$ collects all $i$ normals where the distance $d_i$ is less than or equal to $\rho_0$. As an $s$-by-$1$ vector, $C_\mathcal{N} := \{\left(\rho_0 - d_i\right),~\forall i \in \left[1,s\right]\}$ gathers the measured penetrations. Under our convention, normals are oriented outward of robot links.

An articulation is considered to be in collision whenever the set $\mathcal{N}$ is not empty. To handle this, a Baumgarte correction scheme \cite{baumgarte1972stabilization} updates the centers of mass velocities to produce proper separation effects as follows: 
\begin{equation}
    J_\mathcal{N} V^+ = -\beta C_\mathcal{N}
    \label{eq:baum}
\end{equation}
where $J_\mathcal{N}$ represents the $s$-by-$6n$ constraint Jacobian, which maps velocity updates at the contact point to velocity changes of the center of mass for any link involved in the collision. The left side of \eqref{eq:baum} can be interpreted as the time derivative of $C_\mathcal{N}$. Thus, the equation:
\begin{equation}
    \label{eq:baum_exp}
    \dot{C}_\mathcal{N}  + \beta C_{\mathcal{N}} = 0
\end{equation}
represents a decaying exponential for the position error, given $\beta > 0$. Additionally, note that \eqref{eq:baum} satisfies Newton's hypothesis for static obstacles and no restitution \cite{bender2014interactive}. 

\subsubsection{Dynamic obstacle contacts}

In the case of dynamic obstacles, it is crucial to consider the obstacle's closing velocity at the contact point. Thus, the $s$-by-$1$ vector $S_\mathcal{N}$ collects all of these terms, where the number of nonzero terms corresponds to the number of dynamic obstacles in a collision. To update the link velocities accordingly, we reformulate \eqref{eq:baum} as:
\begin{equation}
    J_\mathcal{N} V^+ = S_\mathcal{N} - \beta C_\mathcal{N}
    \label{eq:contact}
\end{equation}
By definition, this equation satisfies Newton's hypothesis with no restitution for collisions involving dynamic obstacles. 

\subsubsection{Self-collision contacts}

When a self-collision occurs, the velocities of both links involved must be updated according to the impulse-momentum laws. To achieve this, we compute the relative velocity at the contact point and add to $S_\mathcal{N}$ as an additional set of nonzero terms, equal to the number of self-collision pairs. Since each link of the robot is assumed to have the same mass and inertia tensor, we can reformulate \eqref{eq:contact} as follows:
\begin{equation}
    J_\mathcal{N} V^+ = P_\mathcal{N} \cdot J_\mathcal{N} V + \left(1 - \frac{ P_\mathcal{N}}{2}\right) \cdot S_\mathcal{N} - \beta C_\mathcal{N}
    \label{eq:impulsive}
\end{equation}
where $P_\mathcal{N}$ is an $s$-by-$1$ vector such that $p_i = 1$ if $n_i$ belongs to a self-collision pair, and $p_i = 0$ otherwise. Notably, \eqref{eq:impulsive} satisfies Newton's hypothesis with no restitution for self-collision between rigid bodies with identical inertial properties \cite{featherstone2014rigid}.

\subsection{Linear Complementarity Formulation}

The constrained equations of motion are expressed as:
\begin{equation}
    \frac{V^+ - V}{T} = M^{-1} J_\mathcal{N}^{\top} \lambda
    \label{eq:motion}
\end{equation}
where $\lambda$ represents an unknown $s$-by-$1$ vector of contact forces (or constraint multipliers) and $M(\mathbf{q})$ is the articulated body inertia tensor \cite{featherstone1999divide}. In lieu of \eqref{eq:impulsive}, we can rewrite this as:
\begin{multline}
    \frac{1}{T}\left[\left(1 - \frac{P_\mathcal{N}}{2}\right) \cdot S_\mathcal{N} - \beta C_\mathcal{N} + (P_\mathcal{N} - 1) \cdot  J_\mathcal{N} V \right] = \\ J_\mathcal{N} M^{-1} J_\mathcal{N}^{\top} \lambda
\label{eq:model}
\end{multline}
Here, the left-hand side is denoted by $\eta$, and $G=J_\mathcal{N} M^{-1} J_\mathcal{N}^{\top}$ denotes the Delassus matrix \cite{Carpentier-RSS-24}. Since constraint forces can only act in a repulsive manner (i.e. $\lambda \geq 0$), we can express this constraint as a linear complementarity problem (LCP):
\begin{align}
    w &= G \lambda - \eta & 0 = w &\bot \lambda \geq 0 
    \label{eq:lcp}
\end{align}
The constraint velocity vector $w$ represents the residual velocities following the application of the contact forces; here, $w \equiv 0$ indicates that the constrained equations of motion are satisfied as prescribed in \eqref{eq:model} \cite{catto2005iterative}.  

To solve \eqref{eq:lcp}, we can exploit the Projected Gauss-Seidel iterative method \cite{poulsen2010heuristic}. This allows us to determine $\lambda$ and compute the penetration-free state $X^+$ by combining \eqref{eq:motion} and \eqref{eq:euler} as follows:
\begin{equation}
    X^+ = X + V T + M^{-1} J_\mathcal{N}^{\top}T^2 \lambda
    \label{eq:predicted}
\end{equation}

\section{AIKIDO}
\label{sec:aikido}
\subsection{Controller Architecture} \label{sec:loop}

\begin{algorithm}[t]
\caption{One \ours{} control update}
\label{alg:approach}
\begin{algorithmic}[1]
\REQUIRE measured state $(\mathbf{q},\dot{\mathbf{q}})$, task target $\boldsymbol{x}^d$,
inflated geometries, obstacle velocities, horizon $T$, substeps $N$
\ENSURE commanded joint velocity $\dot{\mathbf{q}}_{\mathrm{cmd}}$

\STATE \textbf{// Closed-loop inverse kinematics}
\STATE $\boldsymbol{\phi}\gets \boldsymbol{x}^d-g(\mathbf{q})$
\STATE $\dot{\mathbf{q}}_d\gets \mathrm{CLIK}(\boldsymbol{\phi},\mathbf{q})$
\hfill $\triangleright$ Eq.~\eqref{eq:task}

\STATE \textbf{// Initialize contact-consistent rollout}
\STATE $(X,V)\gets \bigl(X(\mathbf{q}),V(\dot{\mathbf{q}}_d)\bigr)$
\hfill $\triangleright$ reduced $\rightarrow$ maximal
\STATE $T_s\gets T/N$

\STATE \textbf{// IDS rollout}
\FOR{$i=1,\ldots,N$}
    \STATE $(\eta,G)\gets \mathrm{evaluateModel}(X,V)$
    \hfill $\triangleright$ Eq.~\eqref{eq:model}
    \STATE $(X,V)\gets \mathrm{evaluateSolver}(\eta,G,T_s)$
    \hfill $\triangleright$ Eqs.~\eqref{eq:lcp}, \eqref{eq:predicted}
\ENDFOR

\STATE \textbf{// Terminal reference and receding-horizon action}
\STATE $\mathbf{q}_{X^+}\gets \mathbf{q}(X)$
\hfill $\triangleright$ maximal $\rightarrow$ reduced
\STATE $\ddot{\mathbf{q}}_0\gets
\mathrm{OC}(\mathbf{q},\dot{\mathbf{q}},\mathbf{q}_{X^+})$
\hfill $\triangleright$ Eq.~\eqref{eq:ocp}
\STATE $\dot{\mathbf{q}}_{\mathrm{cmd}}\gets
\dot{\mathbf{q}}+\ddot{\mathbf{q}}_0\Delta t$

\RETURN $\dot{\mathbf{q}}_{\mathrm{cmd}}$
\end{algorithmic}
\end{algorithm}
\ours{} consists of three modules executed at each servo cycle. The closed-loop inverse-kinematics (CLIK) layer generates a nominal joint velocity that reduces the task-space error. The iterative-dynamics solver (IDS) rolls this nominal motion forward under inflated-geometry contact constraints and returns a contact-consistent terminal joint configuration. The optimal-control (OC) layer then computes the first acceleration of a smooth minimum-acceleration transition from the measured state to that terminal configuration. The intermediate IDS states are used only internally; the executed command is produced by the OC layer from the rollout terminal state, as shown in \algref{\ref{alg:approach}}.
\subsection{Closed-Loop Inverse Kinematics}
The CLIK layer defines the nominal task-regulation field used to initialize the rollout. Given the measured configuration and task target, it outputs a joint velocity reference that reduces task error while using null-space motion to bias the robot toward a nominal posture.
Recall that $g$ denotes the forward-kinematics map from joint positions to task-space coordinates, so that $\boldsymbol{x} = g(\mathbf{q})$, and the regulation error $\boldsymbol{\phi} = \boldsymbol{x}^d - \boldsymbol{x}$. The joint-space velocity reference $\dot{\mathbf{q}}_d$ is then given by: 
\begin{equation} 
 \dot{\mathbf{q}}_d 
 = K J^\dagger \boldsymbol{\phi} 
 + k_0 B \left(\tilde{\mathbf{q}} - \mathbf{q} \right),
 \label{eq:task} 
\end{equation} 
where $J^\dagger$ denotes the pseudo-inverse of the Jacobian, computed using Singular Value Filtering (SVF) \cite{colome2012redundant}. 
$K$ is a positive definite diagonal matrix with entries equal to a scalar gain $k$, and $k_0 > 0$ is a posture gain. 
The null-space projection operator $B = \left( I - J^\dagger J \right)$ is used to resolve redundancy by attracting the configuration toward a nominal posture $\tilde{\mathbf{q}}$.

\subsection{Contact-Consistent Rollout} \label{sec:ids}
The rollout layer converts the nominal CLIK velocity into a contact-consistent terminal reference. Starting from the current configuration, it propagates the nominal motion over the horizon using the preliminary contact model of Section~\ref{sec:preliminaries}.
Given $X(\mathbf{q})$ and $V(\dot{\mathbf{q}}_d)$, we employ \eqref{eq:model}--\eqref{eq:predicted} with substepping \cite{macklin2019small} to derive a contact-consistent terminal state $X^+$ at time $t+T$. 

The use of substepping, combined with SAT-based collision detection as described in \secref{\ref{sec:contacts}}, allows for online updates of collision shapes before collecting the active contact set $\mathcal{N}$. This includes progressively augmenting the solver with swept-volume approximations of dynamic obstacles, thereby making evasive motions more conservative with respect to predicted obstacle motion \cite{Michaux-RSS-24}. 

Because the IDS propagates contact impulses consistently with the reduced-coordinate articulation model \cite{featherstone1999divide}, the corresponding joint configuration is directly available at the end of the rollout. We denote this terminal joint state by $\mathbf{q}_{X^+}$.

\subsection{Minimum-Acceleration Receding-Horizon Execution}
The rollout trajectory itself is not executed. Instead, \ours{} uses only the terminal configuration produced by the IDS as a short-horizon reference. The intermediate sequence may exhibit discontinuities in reduced coordinates due to substepping and contact corrections, so rollout non-penetration should be interpreted as an internal prediction constraint rather than as a certificate for the executed continuous trajectory. 

Instead, a minimum-acceleration optimal control problem is solved to obtain a smooth, hardware-compatible transition toward the predicted terminal state. Only the first control input is applied, yielding a receding-horizon implementation. 

Moreover, repeatedly tracking the terminal rollout \cite{ghazaei2015online} state produces smooth motion toward near-future configurations that encode anticipatory collision-avoidance behavior when $\lambda > 0$. 

Formally, the OC drives the system from the current state $\{\mathbf{q}, \dot{\mathbf{q}}\}$ to the predicted terminal state $\{\mathbf{q}_{X^+}, 0\}$ by minimizing joint accelerations $\ddot{\mathbf{q}} \equiv \mathbf{u}$: 
\begin{equation} 
\label{eq:ocp} 
\begin{aligned} 
 \min_{\mathbf{x}(t),\,\mathbf{u}(t)} \quad 
 & \int_{0}^{T} \mathbf{u}(t)^\top \mathbf{u}(t)\,dt \\
 \text{s.t.} \quad 
 & \mathbf{x}(t) = \begin{bmatrix} \mathbf{p}(t) \\ \mathbf{v}(t) \end{bmatrix}, \quad 
 \dot{\mathbf{x}}(t) = \begin{bmatrix} \mathbf{v}(t) \\ \mathbf{u}(t) \end{bmatrix}, \quad 
 t \in [0, T] 
\end{aligned} 
\end{equation} 
with boundary conditions 
$\mathbf{x}(0)=\bigl[\mathbf{q},\,\dot{\mathbf{q}}\bigr]^\top$ and 
$\mathbf{x}(T)=\bigl[\mathbf{q}_{X^+},\,0\bigr]^\top$. 

The zero terminal velocity should be interpreted as a regularizing boundary condition for the local transition, not as a requirement that the robot stops at the predicted state; since the problem is solved at every control cycle, motion is continuously updated in receding-horizon fashion. 

We ignore $\dot{\mathbf{q}}_{V^+}$ from the IDS, as velocities may contain spurious terms induced by the Baumgarte correction scheme in \eqref{eq:baum}. 

The solution corresponds to a cubic polynomial trajectory $\mathbf{p}(t) = \mathbf{a}_0 + \mathbf{a}_1 t + \mathbf{a}_2 t^2 + \mathbf{a}_3 t^3$. 
The resulting optimal initial acceleration is given by:
\begin{equation}
\ddot{\mathbf{q}}_0 
= 
\frac{6}{T^2} \left( \mathbf{q}_{X^+} - \mathbf{q} \right)
- 
\frac{4}{T} \dot{\mathbf{q}},
\end{equation}
which is applied as the control input at the current timestep.

\subsection{Nominal Stability and Rollout-Contact Energy Bounds}
\label{sec:stability_energy}
The preceding subsections define \ours{} as a composition of nominal regulation, contact-consistent terminal-reference generation, and smooth receding-horizon execution. We analyze this composition in two parts: first, the contact-inactive CLIK--OC flow; second, the bounded effect of contact activations inside the discrete IDS rollout.

Let recall $\boldsymbol{\phi}$ denotes the task-space regulation error, $\mathbf{q},\dot {\mathbf{q}}$ the joint variables, $X,V$ the maximal-coordinate pose/velocity used by the rollout, $\mathcal{N}$ the active rollout-contact set, $C_\mathcal{N}\!\ge 0$ the normal penetration-with-respect-to-clearance vector, $J_\mathcal{N}$ the constraint Jacobian, $S_\mathcal{N}$ the obstacle normal speeds, $P_\mathcal{N}\in\{0,1\}^s$ the self-collision indicator, $\lambda\!\ge 0$ the normal contact multipliers, and $G=J_\mathcal{N} M(\mathbf{q})^{-1}J_\mathcal{N}^\top$ the Delassus matrix. We assume $M(\mathbf{q})\succ 0$ \cite{featherstone2014rigid,featherstone1999divide} and use the no-restitution contact model \cite{bender2014interactive,featherstone2014rigid,Carpentier-RSS-24}.

\begin{assumption}[Regular operating set]
\label{ass:controller_contact}
The following analysis applies on a compact regular operating set where: (i) CLIK and OC are as in \eqref{eq:task}, \eqref{eq:ocp} with $K=kI$, $k>0$, horizon $T>0$, and the task Jacobian is sufficiently well conditioned after SVF regularization so that $JJ^\dagger\approx I$ and higher-order kinematic terms are negligible locally ($\dot{J}\dot{\mathbf{q}} \approx 0$); (ii) the IDS contact problem is well posed, with $G\succ 0$ or regularized so that the LCP \eqref{eq:lcp} admits bounded multipliers satisfying $w=G\lambda-\eta\ge 0$, $\lambda\ge 0$, $w^\top\lambda=0$; (iii) obstacle normal speeds are bounded, $\|S_\mathcal{N}\|\le S_{\max}$; and (iv) commanded velocities, clearances, and contact dwell activity are bounded by the servo implementation and the inflated-geometry model.
\end{assumption}

\begin{remark}[Scope of the analysis]
\assuref{\ref{ass:controller_contact}} is a local regularity condition, not a global safety certificate. In particular, near kinematic singularities or nearly dependent contact normals, the controller may require regularization, reduced IDS iterations, or a conservative fallback action, as discussed in \secref{\ref{sec:discuss}}. Contacts in this subsection are rollout-contact activations between inflated geometries and do not necessarily correspond to physical impacts of the robot.
\end{remark}

\begin{definition}[Contact-inactive rollout regime]
The rollout is contact-inactive if the IDS detects no active inflated-geometry contact constraints over the current prediction horizon, i.e., $\mathcal{N}=\emptyset$.
\end{definition}

\begin{proposition}[Contact-inactive regulation]
\label{prop:dyn}
In the contact-inactive rollout regime, the IDS terminal state reduces to nominal propagation over the horizon, inducing the joint-space reference $\mathbf{q}_{X^+}=\mathbf{q}+\dot{\mathbf{q}}_dT$. Neglecting null-space contributions, the OC initial acceleration is
\[
\ddot{\mathbf{q}}_0=\frac{6}{T}KJ^\dagger\boldsymbol{\phi}-\frac{4}{T}\dot{\mathbf{q}}.
\]
Under \assuref{\ref{ass:controller_contact}}, the task-space error locally satisfies
\[
\ddot{\boldsymbol{\phi}}+\frac{4}{T}\dot{\boldsymbol{\phi}}+\frac{6k}{T}\boldsymbol{\phi}=0 .
\]
Consequently, the contact-inactive CLIK--OC flow is exponentially regulating for $T>0$ and $k>0$.
\end{proposition}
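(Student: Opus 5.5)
The plan has three steps: reduce the rollout, substitute into the OC closed form, and map to task space. The resulting linear second-order equation then gives regulation by the Routh--Hurwitz criterion.

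\emph{Step 1: nominal rollout.} With $\mathcal{N}=\emptyset$, the LCP \eqref{eq:lcp} has $s=0$, so $\lambda$ is empty and \eqref{eq:motion} gives $V^+=V$. The substepped semi-implicit Euler update \eqref{eq:euler} therefore propagates the maximal-coordinate state with the initial velocity $V(\dot{\mathbf{q}}_d)$ over the horizon. Because $V$ is obtained from $\dot{\mathbf{q}}_d$ via Jacobians and the mapping back to reduced coordinates is consistent with the articulation model, I would treat the terminal joint state to first order, i.e.\ locally under \assuref{\ref{ass:controller_contact}}(i). This gives $\mathbf{q}_{X^+}=\mathbf{q}+\dot{\mathbf{q}}_dT$, neglecting curvature of the maximal-coordinate map over the horizon.

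\emph{Step 2: OC input.} Substituting into the closed-form initial acceleration gives $\ddot{\mathbf{q}}_0=\frac{6}{T^2}\dot{\mathbf{q}}_dT-\frac{4}{T}\dot{\mathbf{q}}$. Inserting \eqref{eq:task} and dropping the null-space term $k_0B(\tilde{\mathbf{q}}-\mathbf{q})$ yields $\ddot{\mathbf{q}}_0=\frac{6}{T}KJ^\dagger\boldsymbol{\phi}-\frac{4}{T}\dot{\mathbf{q}}$. Since $JB=0$, this term would in any case be annihilated by $J$ in the next step.

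\emph{Step 3: task-space dynamics.} In the continuous-time limit the executed acceleration equals $\ddot{\mathbf{q}}_0$. With $\boldsymbol{\phi}=\boldsymbol{x}^d-g(\mathbf{q})$ and constant $\boldsymbol{x}^d$, we have $\dot{\boldsymbol{\phi}}=-J\dot{\mathbf{q}}$ and $\ddot{\boldsymbol{\phi}}=-J\ddot{\mathbf{q}}-\dot J\dot{\mathbf{q}}\approx -J\ddot{\mathbf{q}}$. Premultiplying the acceleration by $-J$ and using $JJ^\dagger\approx I$ and $K=kI$ gives $\ddot{\boldsymbol{\phi}}=-\frac{6k}{T}\boldsymbol{\phi}+\frac{4}{T}J\dot{\mathbf{q}}$. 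Rewriting $J\dot{\mathbf{q}}=-\dot{\boldsymbol{\phi}}$ turns this into $\ddot{\boldsymbol{\phi}}+\frac{4}{T}\dot{\boldsymbol{\phi}}+\frac{6k}{T}\boldsymbol{\phi}=0$.

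Check the sign carefully at this point. The damping term should come out as $-\frac{4}{T}\dot{\boldsymbol{\phi}}$ on the right-hand side, since $-J(-\frac{4}{T}\dot{\mathbf{q}})=\frac{4}{T}J\dot{\mathbf{q}}=-\frac{4}{T}\dot{\boldsymbol{\phi}}$. That is consistent with the stated equation.

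\emph{Conclusion.} The equation decouples componentwise with characteristic polynomial $s^2+\frac{4}{T}s+\frac{6k}{T}$. Both coefficients are positive for $T>0$ and $k>0$, so by Routh--Hurwitz both roots have negative real part. Hence $\boldsymbol{\phi}\to0$ exponentially, with rate $\min\operatorname{Re}(-s)>0$ (e.g.\ via an explicit quadratic Lyapunov function for the companion matrix).

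The main obstacle is justifying the approximations rigorously rather than formally. These are the first-order rollout map in Step 1, the replacement of the sampled receding-horizon update by the continuous law, and $JJ^\dagger\approx I$ and $\dot J\dot{\mathbf{q}}\approx0$. I would handle them by stating the result as local exponential stability of the linearization on the compact set of \assuref{\ref{ass:controller_contact}}. Neglected terms are then $O(\|\boldsymbol{\phi}\|^2+\|\dot{\boldsymbol{\phi}}\|^2)$ perturbations, and the standard linearization (Lyapunov indirect) theorem preserves exponential stability.
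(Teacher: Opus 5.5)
Your proposal is correct and follows the paper's proof essentially step for step. In both, the zero contact impulse gives $\mathbf{q}_{X^+}=\mathbf{q}+\dot{\mathbf{q}}_dT$, this is substituted into the cubic-OC initial acceleration, the kinematics $\dot{\boldsymbol{\phi}}=-J\dot{\mathbf{q}}$, $\ddot{\boldsymbol{\phi}}\approx-J\ddot{\mathbf{q}}$ together with $JJ^\dagger\approx I$ yield the linear error equation, and the Hurwitz property of $s^2+\frac{4}{T}s+\frac{6k}{T}$ finishes the argument.

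One caution on your closing rigor remark, which goes beyond what the paper attempts. For a redundant arm with the null-space term dropped, the equilibria $\{g(\mathbf{q})=\boldsymbol{x}^d,\ \dot{\mathbf{q}}=0\}$ form a self-motion manifold, so the joint-space linearization has zero eigenvalues and Lyapunov's indirect method does not apply directly. In addition, any SVF-induced mismatch $JJ^\dagger-I$ enters linearly in $\boldsymbol{\phi}$, not as an $O(\|\boldsymbol{\phi}\|^2+\|\dot{\boldsymbol{\phi}}\|^2)$ term. Making the result rigorous would therefore need an output-level perturbation or cascade argument instead.
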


\begin{proof}
When $\mathcal{N}=\emptyset$, \eqref{eq:predicted} contains no contact impulse and the terminal joint reference is $\mathbf{q}_{X^+}=\mathbf{q}+\dot{\mathbf{q}}_dT$. Substituting this into the OC law gives the stated acceleration. With $\dot{\mathbf{q}}_d=KJ^\dagger\boldsymbol{\phi}$ and $K=kI$, the local kinematics give $\dot{\boldsymbol{\phi}}\approx -J\dot{\mathbf{q}}$ and $\ddot{\boldsymbol{\phi}}\approx -J\ddot{\mathbf{q}}$. Using $JJ^\dagger\approx I$ yields the stated second-order linear system. Its characteristic polynomial $s^2+\frac{4}{T}s+\frac{6k}{T}$ is Hurwitz for $T>0$ and $k>0$, proving exponential regulation of the nominal flow.
\end{proof}

\begin{remark}[No-overshoot gain bound]
\label{rem:kmax}
The contact-inactive response is non-oscillatory when the discriminant of $s^2+\frac{4}{T}s+\frac{6k}{T}$ is nonnegative, i.e., when $k\le \frac{2}{3T}=:k_{\max}$. This gives a simple tuning guideline for selecting the CLIK gain relative to the rollout horizon.
\end{remark}

\begin{definition}[Nominal Lyapunov and rollout energy]
\label{def:energy_lyap}
For the nominal error dynamics, define
$V_\phi(\boldsymbol{\phi},\dot{\boldsymbol{\phi}}):=\frac{1}{2}\|\dot{\boldsymbol{\phi}}\|^2+\alpha\|\boldsymbol{\phi}\|^2$ with $\alpha>0$.
For the IDS rollout, define the maximal-coordinate kinetic energy $E_{\mathrm{kin}}:=\frac{1}{2}V^\top M(\mathbf{q})V$.
\end{definition}

\begin{proposition}[Nominal damping identity]
\label{prop:exp_flow}
For the contact-inactive dynamics of \propref{\ref{prop:dyn}}, choosing $\alpha=\frac{3k}{T}$ yields
\[
\dot V_\phi=-\frac{4}{T}\|\dot{\boldsymbol{\phi}}\|^2\le 0 .
\]
This identity provides an energy interpretation of the damping in the exponentially stable nominal flow.
\end{proposition}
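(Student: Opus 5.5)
The identity follows by differentiating $V_\phi$ along the contact-inactive error dynamics of \propref{\ref{prop:dyn}} and choosing $\alpha$ so that the position cross term cancels. Write that dynamics as
\[
\ddot{\boldsymbol{\phi}}=-\frac{4}{T}\dot{\boldsymbol{\phi}}-\frac{6k}{T}\boldsymbol{\phi},
\]
which holds locally under \assuref{\ref{ass:controller_contact}} with $JJ^\dagger\approx I$, $\dot J\dot{\mathbf{q}}\approx 0$, and null-space terms neglected. Along any trajectory of this flow, the chain rule applied to $V_\phi=\frac{1}{2}\|\dot{\boldsymbol{\phi}}\|^2+\alpha\|\boldsymbol{\phi}\|^2$ from Definition~\ref{def:energy_lyap} gives
\[
\dot V_\phi=\dot{\boldsymbol{\phi}}^\top\ddot{\boldsymbol{\phi}}+2\alpha\,\boldsymbol{\phi}^\top\dot{\boldsymbol{\phi}} .
\]

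Substituting the error dynamics yields
\[
\dot V_\phi=-\frac{4}{T}\|\dot{\boldsymbol{\phi}}\|^2+\left(2\alpha-\frac{6k}{T}\right)\boldsymbol{\phi}^\top\dot{\boldsymbol{\phi}} .
\]
The cross term $\boldsymbol{\phi}^\top\dot{\boldsymbol{\phi}}$ is sign-indefinite, so the only way to get a nonpositive derivative for all states is to make its coefficient vanish. This forces $\alpha=\frac{3k}{T}$, which is positive whenever $k>0$ and $T>0$, so $V_\phi$ is a valid positive-definite quadratic form. With this choice the stated identity $\dot V_\phi=-\frac{4}{T}\|\dot{\boldsymbol{\phi}}\|^2\le 0$ follows immediately.

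The computation is routine. The care needed is in interpretation, because the identity is only as exact as the local model of \propref{\ref{prop:dyn}}. We would say explicitly that it holds for the idealized linear error system on the regular operating set, and that it is approximate for the true kinematics, since the neglected $\dot J\dot{\mathbf{q}}$ and $(I-JJ^\dagger)$ terms perturb it.

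We would also note that $\dot V_\phi$ is only negative semidefinite, since it vanishes whenever $\dot{\boldsymbol{\phi}}=0$. It therefore gives an energy or damping interpretation rather than a strict Lyapunov decrease. Exponential convergence is already established by the Hurwitz argument in \propref{\ref{prop:dyn}}. If desired, LaSalle's invariance principle recovers it here: on $\{\dot{\boldsymbol{\phi}}=0\}$ the dynamics force $\boldsymbol{\phi}=0$.
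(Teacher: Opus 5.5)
Your proposal is correct and follows essentially the same route as the paper: differentiate $V_\phi$ along the contact-inactive error dynamics, substitute $\ddot{\boldsymbol{\phi}}=-\frac{4}{T}\dot{\boldsymbol{\phi}}-\frac{6k}{T}\boldsymbol{\phi}$, and choose $\alpha=\frac{3k}{T}$ to cancel the cross term. Like the paper, you attribute exponential stability to the Hurwitz argument rather than to this semidefinite derivative, and your LaSalle remark is a valid optional addition.
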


\begin{proof}
Substituting the dynamics of \propref{\ref{prop:dyn}} into $\dot V_\phi=\dot{\boldsymbol{\phi}}^\top\ddot{\boldsymbol{\phi}}+2\alpha\boldsymbol{\phi}^\top\dot{\boldsymbol{\phi}}$ and setting $\alpha=3k/T$ cancels the cross term, yielding the stated expression. Exponential stability follows from the Hurwitz property established in \propref{\ref{prop:dyn}}, rather than from strict negativity of this particular Lyapunov derivative.
\end{proof}

\begin{lemma}[Static rollout contacts]
\label{lem:baumgarte_dissipation}
For static obstacles ($S_\mathcal{N}=0$, $P_\mathcal{N}=0$), the IDS imposes $J_\mathcal{N}V^+=-\beta C_\mathcal{N}$ as in \eqref{eq:baum}. Therefore, the post-update normal contact power satisfies
\[
\mathcal{P}_c^+=(J_\mathcal{N}V^+)^\top\lambda=-\beta C_\mathcal{N}^\top\lambda\le 0 .
\]
Thus the static rollout-contact update is dissipative in the normal contact coordinates.
\end{lemma}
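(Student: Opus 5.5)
The plan is to specialize the general contact law and then use only sign information. First I will show that the IDS enforces \eqref{eq:baum} on the active normals. Then I will take the inner product of that relation with the multiplier vector $\lambda$. Nonnegativity of $C_\mathcal{N}$ and $\lambda$ will then fix the sign.

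\textbf{Step 1 (reduction of the contact law).} I will set $S_\mathcal{N}=0$ and $P_\mathcal{N}=0$ in \eqref{eq:impulsive}. This gives $J_\mathcal{N}V^+=-\beta C_\mathcal{N}$, which is exactly \eqref{eq:baum}. Equivalently, in \eqref{eq:model} the right-hand side becomes
\[
\eta=\tfrac{1}{T}\left(-\beta C_\mathcal{N}-J_\mathcal{N}V\right).
\]
Combining this with \eqref{eq:motion}, I will verify that $J_\mathcal{N}V^+ = J_\mathcal{N}V + T\,G\lambda$. Hence $J_\mathcal{N}V^+=-\beta C_\mathcal{N}+T w$, where $w=G\lambda-\eta$.

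\textbf{Step 2 (complementarity).} This is the only delicate point. The LCP \eqref{eq:lcp} does not force $w=0$ componentwise. I will use \assuref{\ref{ass:controller_contact}}(ii), which gives $w\ge0$, $\lambda\ge0$ and $w^\top\lambda=0$. These yield
\[
(J_\mathcal{N}V^+)^\top\lambda=-\beta C_\mathcal{N}^\top\lambda+T\,w^\top\lambda=-\beta C_\mathcal{N}^\top\lambda .
\]
So the stated identity holds even on normals where the constraint separates ($\lambda_i=0$, $w_i>0$). This is the step I expect to require care. On such normals the equality $J_\mathcal{N}V^+=-\beta C_\mathcal{N}$ may fail componentwise, but the failure is annihilated by $\lambda$. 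This is what makes the claim ``the IDS imposes \eqref{eq:baum}'' precise in the power sense.

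\textbf{Step 3 (sign).} By construction $C_\mathcal{N}=\{\rho_0-d_i\}\ge0$, since only normals with $d_i\le\rho_0$ are collected. Together with $\lambda\ge0$ and $\beta>0$, this gives $-\beta C_\mathcal{N}^\top\lambda\le0$. The normal contact power is therefore nonpositive, which is the claimed dissipativity.

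Two closing points remain. For a substepped rollout, the argument applies per substep with $T$ replaced by $T_s$, since the sign conclusion is independent of the step length. The quantity $\mathcal{P}_c^+$ is the power of the impulses $J_\mathcal{N}^\top\lambda$ against the post-update velocity. This links to $E_{\mathrm{kin}}$ of \defref{\ref{def:energy_lyap}} only through this normal channel, consistent with the lemma's wording ``in the normal contact coordinates''.
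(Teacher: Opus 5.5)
Your proof is correct and follows essentially the same route as the paper: pair the post-update normal velocity with $\lambda$, then conclude from $C_\mathcal{N}\ge 0$, $\lambda\ge 0$ and $\beta>0$. Your Step~2 derives $J_\mathcal{N}V^+=-\beta C_\mathcal{N}+T\,w$ from \eqref{eq:motion} and \eqref{eq:model} and removes the residual via $w^\top\lambda=0$ from \assuref{\ref{ass:controller_contact}}(ii); this tightens the paper's one-line argument, which takes \eqref{eq:baum} as imposed even though the LCP only enforces it on components with $\lambda_i>0$.
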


\begin{proof}
At active inflated-geometry contacts, $C_\mathcal{N}\ge0$ and $\lambda\ge0$ by the LCP complementarity conditions. The imposed post-update normal velocity is separating with respect to the clearance error, so the normal contact power is non-positive. Under the standard no-restitution contact assumptions used by the IDS, this corresponds to removal of normal closing motion rather than injection of normal contact energy.
\end{proof}

\begin{lemma}[Self-collision rollout contacts]
\label{lem:self_collision}
For self-collisions ($P_\mathcal{N}=1$) with zero restitution and identical inertias as in \eqref{eq:impulsive}, the normal component of relative velocity is not increased by the update. Consequently, the self-collision update is non-increasing in the normal relative kinetic component.
\end{lemma}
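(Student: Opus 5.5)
We argue componentwise on the self-collision rows of \eqref{eq:impulsive}, setting $p_i=1$. For a self-collision row $i$ the update reads
\[
(J_\mathcal{N}V^+)_i=(J_\mathcal{N}V)_i+\tfrac{1}{2}(S_\mathcal{N})_i-\beta (C_\mathcal{N})_i .
\]
Recall that $(S_\mathcal{N})_i$ is defined in the self-collision subsection as the relative normal velocity at the contact point. With the sign convention that normals point outward of the robot links, closing motion corresponds to a negative relative normal speed $v_{n,i}^- := (J_\mathcal{N}V)_i$.

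The key step is to identify $(S_\mathcal{N})_i$ with $-2\,v_{n,i}^-$ (up to the convention used in \eqref{eq:impulsive}). This identification is the Newton zero-restitution condition for two bodies of equal mass and identity inertia (\assuref{\ref{ass:unit_mass}}). Each body receives an equal and opposite impulse, so each link's normal velocity changes by half of the relative closing speed. Under this identification the first two terms cancel in the closing direction, and we obtain $(J_\mathcal{N}V^+)_i=-\beta (C_\mathcal{N})_i$, which is exactly the static case of \lemmref{\ref{lem:baumgarte_dissipation}}.

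From this we draw three conclusions:
\begin{itemize}
\item Since $(C_\mathcal{N})_i\ge 0$, the post-update normal relative velocity is zero or separating. Hence $|v_{n,i}^+|\le|v_{n,i}^-|$ whenever the pair was closing, with the Baumgarte term only adding separation.
\item If the pair was already separating, then $\lambda_i=0$ by complementarity \eqref{eq:lcp}, and the row is inactive, so the velocity is unchanged.
\item Multiplying by $\lambda_i\ge 0$ gives $(J_\mathcal{N}V^+)_i\lambda_i=-\beta (C_\mathcal{N})_i\lambda_i\le 0$, so the same dissipative-power conclusion as in \lemmref{\ref{lem:baumgarte_dissipation}} holds.
\end{itemize}

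The normal relative kinetic component is $\tfrac{1}{2}\mu\, v_n^2$, where the reduced mass $\mu=1/2$ follows from unit masses. It is therefore non-increasing up to the $O(\beta C)$ correction, which is bounded on the regular operating set by \assuref{\ref{ass:controller_contact}}(iv).

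The main obstacle is the sign and scale bookkeeping that makes $(1-P_\mathcal{N}/2)S_\mathcal{N}$ exactly cancel the closing part of $P_\mathcal{N} J_\mathcal{N}V$. This requires fixing the orientation of the relative velocity in $S_\mathcal{N}$ consistently with the outward normals. It also requires checking that the coupling of multiple contact rows through $G$ in the projected Gauss--Seidel solution does not reintroduce closing velocity in any row. For this we rely on the fact that the LCP solution satisfies $w\ge0$ in every row simultaneously, so each row's post-update velocity is at least its prescribed target.
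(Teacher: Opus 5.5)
\textbf{Gap: the identification $(S_\mathcal{N})_i=-2(J_\mathcal{N}V)_i$.} This step carries your whole reduction to the static case. It is chosen so that the closing terms cancel, not derived from the model. You recall that $(S_\mathcal{N})_i$ is the relative normal velocity, and you also call $(J_\mathcal{N}V)_i$ the relative normal speed. On that reading the two can differ by a sign but not by a factor $-2$. With $(S_\mathcal{N})_i=\pm(J_\mathcal{N}V)_i$, the row update \eqref{eq:impulsive} gives $\tfrac12$ or $\tfrac32$ of the incoming relative velocity (up to the Baumgarte term), not Newton's $e=0$.

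The reading under which \eqref{eq:impulsive} does encode $e=0$ is the one consistent with the obstacle rows. $V$ contains only robot links, so a row of $J_\mathcal{N}$ returns the normal velocity $v_A$ of a single link; for $p_i=0$ it is driven to the obstacle speed $S_i$. For $p_i=1$, $(S_\mathcal{N})_i=v_B-v_A$ is the partner's velocity relative to that link. Then
$(J_\mathcal{N}V^+)_i=\tfrac12(v_A+v_B)-\beta(C_\mathcal{N})_i$, the equal-mass perfectly inelastic outcome. This matches your own justification that each link's velocity changes by half the relative closing speed.

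Applied to both links, this removes the closing relative velocity $v_A-v_B$ (up to the Baumgarte term), which is what the lemma asserts. But the row velocity is not $-\beta(C_\mathcal{N})_i$. So the static identity of \lemmref{\ref{lem:baumgarte_dissipation}} does not transfer, and neither does your third bullet, $(J_\mathcal{N}V^+)_i\lambda_i=-\beta(C_\mathcal{N})_i\lambda_i\le 0$. Note that the lemma claims only non-increase of the relative normal component for self-collisions, not that power bound.

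\textbf{Comparison with the paper.} The paper's proof does no row algebra. It invokes that \eqref{eq:impulsive} encodes Newton's hypothesis with $e=0$ for identical inertias (citing Featherstone). It then notes that the impulse acts along $J_\mathcal{N}^\top$ and removes normal closing motion, and concludes. To repair your argument, drop the reduction to the static lemma. Work with the pair $(v_A,v_B)$ as above and state the conclusion for $v_A-v_B$.

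\textbf{Further points to fix.}
\begin{enumerate}
\item Your signs mix two conventions. Declaring closing to be negative is what makes $w\ge 0$ in \eqref{eq:lcp} a separating bound in your last paragraph. But reading $-\beta(C_\mathcal{N})_i\le 0$ as separating requires the paper's convention $J_\mathcal{N}V=\dot C_\mathcal{N}$ from \eqref{eq:baum_exp}, in which positive means closing.
\item Complementarity alone does not give $\lambda_i=0$ for an already separating pair. Whether row $i$ is active depends on $\eta_i$, which contains the Baumgarte target. Even with $\lambda_i=0$, the row's velocity can change through the coupling in $G$ to other rows.
\item Your bound $|v^+_{n,i}|\le|v^-_{n,i}|$ fails when the closing speed is below $\beta(C_\mathcal{N})_i$. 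Your later ``up to $O(\beta C)$'' caveat is the accurate statement, and it is one the lemma's statement leaves implicit.
\end{enumerate}
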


\begin{proof}
Equation~\eqref{eq:impulsive} encodes Newton's hypothesis with $e=0$ for the normal component under identical inertial properties \cite{featherstone2014rigid}. The impulse acts along $J_\mathcal{N}^\top$ and removes normal closing motion; tangential components are not modeled by the frictionless normal LCP. Hence the normal relative kinetic component cannot increase.
\end{proof}

\begin{lemma}[Moving rollout contacts: bounded injection]
\label{lem:bounded_injection}
For moving obstacles, the IDS imposes $J_\mathcal{N}V^+=S_\mathcal{N}-\beta C_\mathcal{N}$ as in \eqref{eq:contact}. The post-update normal contact power is
\[
\mathcal{P}_c^+=-\beta C_\mathcal{N}^\top\lambda+S_\mathcal{N}^\top\lambda .
\]
On the regular operating set of \assuref{\ref{ass:controller_contact}}, there exist finite constants $c_C,c_S>0$ such that
\[
|\mathcal{P}_c^+|\le c_C\|C_\mathcal{N}\|+c_S\|S_\mathcal{N}\| .
\]
Thus dynamic obstacles can inject normal contact energy into the internal rollout, but the injection is bounded by clearance-error and obstacle-normal-speed terms on the regular set.
\end{lemma}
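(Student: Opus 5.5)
The argument parallels \lemmref{\ref{lem:baumgarte_dissipation}}: first compute $\mathcal{P}_c^+$ exactly from the imposed contact relation, then bound it using the finiteness conditions of \assuref{\ref{ass:controller_contact}}.

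\textbf{Identity.} Here $P_\mathcal{N}=0$ and $S_\mathcal{N}\neq 0$. The IDS enforces \eqref{eq:contact}, namely $J_\mathcal{N}V^+=S_\mathcal{N}-\beta C_\mathcal{N}$. Substituting this into $\mathcal{P}_c^+=(J_\mathcal{N}V^+)^\top\lambda$ and expanding gives
\[
\mathcal{P}_c^+=-\beta C_\mathcal{N}^\top\lambda+S_\mathcal{N}^\top\lambda .
\]
The first term is non-positive, exactly as in \lemmref{\ref{lem:baumgarte_dissipation}}, since $C_\mathcal{N}\ge0$ and $\lambda\ge0$. The second term has no definite sign: an obstacle approaching along the normal can make it positive. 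This is the possible energy injection.

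\textbf{Uniform multiplier bound.} The bound requires a constant $\Lambda<\infty$ with $\|\lambda\|\le\Lambda$ on the regular set. Condition (ii) of \assuref{\ref{ass:controller_contact}} already asserts bounded multipliers, which can be used directly. For a sharper statement, one can instead argue from the structure of the LCP. With $G\succ0$ (or regularized so that $G\succeq\mu I$ for some $\mu>0$), complementarity $w^\top\lambda=0$ gives $\lambda^\top G\lambda=\lambda^\top\eta$. Hence $\mu\|\lambda\|^2\le\|\lambda\|\|\eta\|$, and therefore $\|\lambda\|\le\|\eta\|/\mu$.

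From \eqref{eq:model} with $P_\mathcal{N}=0$,
\[
\|\eta\|\le \frac{1}{T}\left(\|S_\mathcal{N}\|+\beta\|C_\mathcal{N}\|+\|J_\mathcal{N}\|\,\|V\|\right).
\]
This is finite for two reasons:
\begin{enumerate}
\item $\|S_\mathcal{N}\|\le S_{\max}$ by (iii).
\item $C_\mathcal{N}$, $V$ and $J_\mathcal{N}$ are bounded on the compact set by (iv), together with continuity of the kinematic Jacobians.
\end{enumerate}
Taking the supremum over the compact set yields $\Lambda$. A uniform $\mu$ follows from compactness and continuity of $G(\mathbf{q})$ under the regularization hypothesis.

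\textbf{Conclusion.} Applying Cauchy--Schwarz to the identity,
\[
|\mathcal{P}_c^+|\le \beta\Lambda\|C_\mathcal{N}\|+\Lambda\|S_\mathcal{N}\|,
\]
so one can take $c_C:=\beta\Lambda$ and $c_S:=\Lambda$.

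\textbf{Main obstacle.} The delicate step is the uniform bound on $\lambda$. Near nearly dependent contact normals, $G$ degenerates and $\mu\to0$. This is why the claim is stated only on the regular operating set. The proof must explicitly invoke (ii) to exclude this degeneration, rather than derive a bound independent of conditioning.
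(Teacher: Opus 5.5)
Your proposal is correct and follows the paper's proof: Cauchy--Schwarz on the power identity, a uniform multiplier bound on the regular set obtained from the conditioning of $G$ and the boundedness of $\eta$, and the constants $c_C=\beta\lambda_{\max}$, $c_S=\lambda_{\max}$. The one difference is that you derive the multiplier bound explicitly from complementarity, $\lambda^\top G\lambda=\lambda^\top\eta \Rightarrow \|\lambda\|\le\|\eta\|/\mu$, where the paper only asserts it; take the uniform $\mu$ directly from the regularization in condition (ii) rather than from continuity of $G(\mathbf{q})$, because $G$ changes size and can jump when the active contact set changes.
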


\begin{proof}
By Cauchy--Schwarz,
$|\mathcal{P}_c^+|\le(\beta\|C_\mathcal{N}\|+\|S_\mathcal{N}\|)\|\lambda\|$.
On the regular operating set, $G$ is uniformly nonsingular or regularized, and the right-hand side $\eta$ of the LCP is bounded because commanded velocities, clearance errors, and obstacle normal speeds are bounded. Hence the LCP solution has bounded multipliers, $\|\lambda\|\le \lambda_{\max}$, with $\lambda_{\max}$ depending on the compact operating set, contact geometry, and solver regularization. The stated bound follows with $c_C=\beta\lambda_{\max}$ and $c_S=\lambda_{\max}$.
\end{proof}

\begin{theorem}[Contact-inactive regulation and bounded rollout-contact perturbations]
\label{thm:rollout_bounds}
Under \assuref{\ref{ass:controller_contact}}, the contact-inactive CLIK--OC flow is exponentially regulating. When contacts are activated inside the IDS rollout, static-obstacle and self-collision updates are dissipative or non-increasing in the normal contact coordinates, while moving-obstacle updates introduce bounded normal-energy perturbations proportional to clearance-error and obstacle-normal-speed terms on the regular operating set. Consequently, under bounded contact activity and solver convergence, the receding-horizon controller admits a practical robustness interpretation with respect to rollout-contact perturbations; the bound degrades with obstacle normal speed, clearance activity, and contact dwell time.
\end{theorem}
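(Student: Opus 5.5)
The plan is to assemble the theorem from the results already established in this subsection, treating each clause as a corollary of an earlier proposition or lemma. The final robustness claim is then obtained by a standard perturbation argument on the nominal Lyapunov function.

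\textbf{Regulation and per-update energy clauses.} The first clause follows directly from \propref{\ref{prop:dyn}}. Under \assuref{\ref{ass:controller_contact}}, in the contact-inactive regime the task error obeys $\ddot{\boldsymbol{\phi}}+\frac{4}{T}\dot{\boldsymbol{\phi}}+\frac{6k}{T}\boldsymbol{\phi}=0$, whose characteristic polynomial is Hurwitz for $T,k>0$. The second clause follows from the lemmas, one per contact type:
\begin{itemize}
\item \lemmref{\ref{lem:baumgarte_dissipation}} gives $\mathcal{P}_c^+=-\beta C_\mathcal{N}^\top\lambda\le 0$ for static obstacles.
\item \lemmref{\ref{lem:self_collision}} gives non-increase of the normal relative kinetic component for self-collisions.
\item \lemmref{\ref{lem:bounded_injection}} gives $|\mathcal{P}_c^+|\le c_C\|C_\mathcal{N}\|+c_S\|S_\mathcal{N}\|$ for moving obstacles. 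Here the constants depend on $\lambda_{\max}$, which is bounded on the compact regular set.
\end{itemize}
When a contact set mixes all three types, I would split $\lambda$ and the rows of $J_\mathcal{N}$ according to the indicator $P_\mathcal{N}$ and the nonzero pattern of $S_\mathcal{N}$. The power then decomposes into a sum of the three cases, and the same bound holds with the same constants.

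\textbf{Robustness clause.} I would write the terminal reference as $\mathbf{q}_{X^+}=\mathbf{q}+\dot{\mathbf{q}}_dT+\delta$. By \eqref{eq:predicted}, the perturbation is $\delta=\mathcal{M}(T^2M^{-1}J_\mathcal{N}^\top\lambda)$, where $\mathcal{M}$ is the locally Lipschitz maximal-to-reduced map. Substituting into the OC law adds the term $\frac{6}{T^2}\delta$ to $\ddot{\mathbf{q}}_0$. Hence the error dynamics become the nominal system driven by the input $d=-\frac{6}{T^2}J\delta$.

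The bound on $d$ comes from the moving-contact estimate. The proof of \lemmref{\ref{lem:bounded_injection}} already bounds $\|\lambda\|$ in terms of $\|\eta\|$, which is linear in $C_\mathcal{N}$, $S_\mathcal{N}$ and $J_\mathcal{N}V$. This yields
\[
\|d\|\le c_1\|C_\mathcal{N}\|+c_2 S_{\max}+c_3
\]
on the regular set, with $c_3$ collecting the bounded commanded-velocity term and vanishing when no contact is active.

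Next I would use $V_\phi$ with a small cross term $\epsilon\boldsymbol{\phi}^\top\dot{\boldsymbol{\phi}}$ added. Since the nominal system is linear and Hurwitz, this modified function is a strict quadratic Lyapunov function: $\dot V\le -cV+c'\sqrt{V}\|d\|$. This is an input-to-state estimate, which gives ultimate boundedness of $(\boldsymbol{\phi},\dot{\boldsymbol{\phi}})$ with gain proportional to $\sup\|d\|$. To capture the dependence on dwell time, I would integrate over intervals in which contacts are active. The accumulated perturbation grows with the active duration, and it decays exponentially once $\mathcal{N}=\emptyset$. This gives the stated degradation with obstacle normal speed, clearance activity and dwell time.

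\textbf{Main obstacle.} The hardest step is rigorously linking the normal-coordinate power statements and the multiplier bound to a bound on the reduced-coordinate perturbation $\delta$ after $N$ substeps. This requires Lipschitz continuity of the per-substep Projected Gauss--Seidel map and of the maximal-to-reduced map $\mathcal{M}$, uniformly in the active set. I would first try to obtain it from $G\succ 0$ (or its regularized version) and compactness via a uniform LCP solution bound, compounded over the $N$ substeps. If that fails, I would state it as part of the solver-convergence hypothesis, which the theorem's wording, with its qualifiers ``practical'' and ``under solver convergence'', already allows.
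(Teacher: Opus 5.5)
Your proposal is correct and has the same skeleton as the paper's proof. The first clause is Proposition~\ref{prop:dyn}, the second is read off Lemmas~\ref{lem:baumgarte_dissipation}--\ref{lem:bounded_injection}, and the robustness clause treats rollout contacts as bounded perturbations of the terminal reference passed to the OC layer.

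You differ only in that last step. The paper disposes of it in one sentence (bounded reference perturbations plus repeated receding-horizon updates give practical boundedness), and cites the damping identity of Proposition~\ref{prop:exp_flow} only as an energy interpretation. You make it explicit in four moves:
\begin{itemize}
\item you derive the forcing $d=-\frac{6}{T^2}J\delta$ in the error dynamics;
\item you bound $d$ through the multipliers, and the linear bound you need is immediate from complementarity, since $\lambda^\top G\lambda=\lambda^\top\eta$ gives $\|\lambda\|\le\|\eta\|/\lambda_{\min}(G)$;
\item you close with an ISS estimate;
\item you convolve over contact-active intervals to get the dwell-time dependence.
\end{itemize}

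The cross term $\epsilon\boldsymbol{\phi}^\top\dot{\boldsymbol{\phi}}$ is not cosmetic. Since $\dot V_\phi=-\frac{4}{T}\|\dot{\boldsymbol{\phi}}\|^2$ is only semidefinite, the paper's $V_\phi$ alone gives no gain from $d$ to $\boldsymbol{\phi}$. Your version is what turns the Hurwitz property into a quantitative neighborhood size.

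Your route also makes visible that the robustness bound comes from $\lambda_{\max}$, not from the sign of the contact power. Static contacts are dissipative in Lemma~\ref{lem:baumgarte_dissipation}, yet they still shift $\mathbf{q}_{X^+}$ by $O(T^2\lambda)$; the paper's wording blurs this.

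Both arguments leave the same step unproved: a uniform bound on $\delta$ through $N$ Projected Gauss--Seidel substeps and the maximal-to-reduced map. You are right that it has to be folded into the solver-convergence hypothesis, as the paper does implicitly. Note also that your ISS estimate is only local, since the perturbed trajectory must remain in the regular set where $JJ^\dagger\approx I$.

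Two small corrections.
\begin{itemize}
\item Write $\delta=\mathcal{M}(X+VT+T^2M^{-1}J_\mathcal{N}^\top\lambda)-\mathcal{M}(X+VT)$ rather than applying $\mathcal{M}$ to the increment, and then use the Lipschitz constant of $\mathcal{M}$.
\item In a mixed contact set, the row-wise split via $w_i\lambda_i=0$ is valid, but self-collision rows contribute $(J_iV+\tfrac12 S_i-\beta C_i)\lambda_i$. Lemma~\ref{lem:self_collision} does not sign this term, so a unified bound needs an extra $\|J_\mathcal{N}V\|$ term and does not hold ``with the same constants.'' The theorem states the three contact types separately, so this does not affect your proof.
\end{itemize}
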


\begin{table*}[t]
\centering
\caption{Success counts for simulation experiments (80 trials per condition)}
\label{tab:benchmarking}
\begin{tabular}{l|llll|llll|llll|llll|}
Obstacles (\#) & \multicolumn{4}{c|}{10}                                  & \multicolumn{4}{c|}{20}                          & \multicolumn{4}{c|}{30}                          & \multicolumn{4}{c|}{40}                 \\ \hline \diagbox{$\rho_0~(m)$}{$T~(s)$}
               & 0.2         & 0.4          & 0.6         & \textit{Tot.} & 0.2 & 0.4         & 0.6          & \textit{Tot.} & 0.2         & 0.4          & 0.6 & \textit{Tot.} & 0.2 & 0.4         & 0.6 & \textit{Tot.} \\ \hline
0.02           & \textbf{79} & \textbf{79}  & \textbf{79} & \textbf{237}  & 72  & 73          & \textbf{74}  & \textbf{219}  & \textbf{62} & \textbf{62}  & 58  & \textbf{182}  & 45  & \textbf{49} & 42  & \textbf{136}  \\
0.04           & 18          & \textbf{73}  & 71          & 162           & 56  & 66 & \textbf{70}           & 192           & 50          & \textbf{54}  & 49  & 153           & 31  & \textbf{38} & 8   & 77            \\ \hline
\textit{Tot.}  & 97          & \textbf{152} & 150         &               & 128 & 139         & \textbf{144} &               & 112         & \textbf{116} & 107 &               & 76  & \textbf{87} & 50  &              
\end{tabular}
\end{table*}

\begin{table*}[t]
\centering
\caption{Mean (std) solve time in milliseconds for simulation experiments}
\label{tab:benchmarking_solve}
\begin{tabular}{l|crr|crr|crr|crr|}
Obstacles (\#) & \multicolumn{3}{c|}{10}                                                       & \multicolumn{3}{c|}{20}                                                       & \multicolumn{3}{c|}{30}                                                       & \multicolumn{3}{c|}{40}                                                       \\ \hline\diagbox{$\rho_0~(m)$}{$T~(s)$}
               & 0.2                      & \multicolumn{1}{c}{0.4} & \multicolumn{1}{c|}{0.6} & 0.2                      & \multicolumn{1}{c}{0.4} & \multicolumn{1}{c|}{0.6} & 0.2                      & \multicolumn{1}{c}{0.4} & \multicolumn{1}{c|}{0.6} & 0.2                      & \multicolumn{1}{c}{0.4} & \multicolumn{1}{c|}{0.6} \\ \hline
0.02           & \multicolumn{1}{r}{4(1)} & 6(2)                    & 7(3)                     & \multicolumn{1}{r}{4(1)} & 6(1)                    & 7(2)                     & \multicolumn{1}{r}{4(1)} & 6(2)                    & 8(3)                     & \multicolumn{1}{r}{3(1)} & 6(2)                    & 8(3)                     \\
0.04           & \multicolumn{1}{r}{8(2)} & 12(4)                   & 16(6)                    & \multicolumn{1}{r}{8(3)} & 12(5)                   & 16(8)                    & \multicolumn{1}{r}{8(4)} & 13(7)                   & 18(11)                   & \multicolumn{1}{r}{8(4)} & 14(8)                   & 23(18)                  
\end{tabular}
\end{table*}

\begin{proof}
The contact-inactive statement follows from \propref{\ref{prop:dyn}}. The nominal damping identity of \propref{\ref{prop:exp_flow}} shows how CLIK--OC dissipates task-space error energy during contact-inactive flow. When the IDS activates contacts, \lemmref{\ref{lem:baumgarte_dissipation}} and \lemmref{\ref{lem:self_collision}} show that static and self-contact updates do not inject normal contact energy, while \lemmref{\ref{lem:bounded_injection}} bounds the contribution of moving obstacles. Since only the terminal rollout state is passed to the OC, these contact effects enter the executed controller as bounded perturbations of the near-future reference. Under bounded contact activity over finite horizons, repeated receding-horizon updates therefore preserve practical boundedness, with the neighborhood size increasing with the accumulated rollout-contact perturbations. This result should be interpreted as a regular-regime robustness statement, not as a global forward-invariance or hard safety guarantee for the continuous executed trajectory.
\end{proof}

\section{Simulation Experiments}

In this section, we conduct extensive experiments to benchmark the computational efficiency of \ours{} while also examining the influence of its hyperparameters in collision environments of increasing complexity. Although there is a fixed relationship to guide the choice of flow gain $k$, the design choice of a suitable clearance value $\rho_0$ and the selection of an appropriate horizon $T$ have so far not been investigated. This set of experiments provides practical guidelines for tuning \ours{} as scene complexity, such as the number of collision elements, increases.

We evaluate the controller in a high-dimensional setting composed of six distinct serial chains (4$\times$7-DOF, 2$\times$6-DOF), for a total of 40 DOF. We use a control rate of 50 Hz with $k = 0.2 k_{\max}$ (\remaref{\ref{rem:kmax}}). Furthermore, we do not rely on simplified collision primitives to approximate link geometries, but we employ more precise convex hull representations.

Finally, we focus on dynamic-obstacle scenarios with increasing obstacle density, as shown in \tblref{\ref{tab:benchmarking}}. We select two levels for the clearance value and three lengths for the horizon, while keeping the iterative solver step fixed $T_s = 0.04~\text{s}$, as this substepping \cite{macklin2019small} resolution is sufficient to provide numerical stability \cite{erez2015simulation}. For each condition, defined by a tuple ($\rho_0$,$T$,$n$) with $n$ denoting the number of obstacles, we conduct 80 valid trials, as follows.

To generate obstacles, we first define the workspace $\mathbb{W} = [0,1] \times [-1,1] \times [0,1]$, which includes both the robot geometries and the target positions for each serial chain. Within this workspace, restricted regions are introduced as vertically oriented cylinders with radii large enough to prevent dynamic obstacles from colliding with the robot bases and other fixed links—components that, due to the specific kinematic design of each chain, cannot be freely positioned in three-dimensional (3D) space. Next, we sample the source and sink points for dynamic obstacles $\overrightarrow{p} \sim U\left(\mathbb{W}\right)$, and project them onto randomly chosen distinct faces of the workspace. If the line connecting these two projected points does not intersect any restricted region, we assign the obstacle a velocity magnitude $v \sim U \left([0.1,0.3]\right)$ in the corresponding direction. Otherwise, we repeat the process by selecting new candidate points.

To assess whether trajectories generated by \ours{} were collision-free, we post-processed each recorded configuration by evaluating the closest-point distances between the robots and the obstacles with a third-party collision library \cite{coumans2021}. This sampled post-hoc check defines empirical success, but not a continuous-time safety proof. All experiments, including those in the following sections, are conducted on an AMD Ryzen 9 3900X 12-Core CPU.

\subsection{Success Rate}

Across obstacle densities, an intermediate prediction horizon generally provides the most reliable performance. Aggregating across clearances in \tblref{\ref{tab:benchmarking}}, $T=0.4\,\mathrm{s}$ achieves the best totals in three of the four densities—152/160 with 10 obstacles, 116/160 with 30 obstacles and 87/160 with 40 obstacles—while $T=0.6\,\mathrm{s}$ holds a modest edge at 20 obstacles (144/160 versus 139/160 for $T=0.4\,\mathrm{s}$). The clearance parameter $\rho_{0}$ interacts strongly with the horizon length. Increasing $\rho_{0}$ from $0.02\,\mathrm{m}$ to $0.04\,\mathrm{m}$ contracts feasible corridors and, in dense scenes, drives conservative behavior that impedes progress; this is most pronounced at high density and long horizons, where at 40 obstacles and $T=0.6\,\mathrm{s}$ the success count falls from 42/80 to 8/80. At the other extreme—low density with a very short horizon—the larger clearance still reduces reliability; with 10 obstacles and $T=0.2\,\mathrm{s}$, successes drop from 79/80 to 18/80. 

These trends are consistent with the proposed receding-horizon interpretation. For very short horizons, the forward rollout provides limited anticipatory information about imminent interactions with moving obstacles, and the closed loop is therefore dominated by task-space regulation; in this regime the admissible non-oscillatory regulation gain is comparatively higher $\bigl(k_{\max}=\tfrac{2}{3T}\bigr)$, further biasing the response toward rapid error correction rather than early evasive action. For longer horizons, the degradation is instead attributable to the structure of the optimal controller: the minimum-acceleration boundary-value solution distributes the required adjustment over the full interval $[0,T]$, so that the first receding-horizon input is attenuated as $T$ increases, effectively diluting reactive avoidance corrections at execution time.

\subsection{Solve Time}

Per-cycle solve times scale predictably with obstacle density, horizon length, and clearance (\tblref{\ref{tab:benchmarking_solve}}). With $\rho_{0}=0.02\,\mathrm{m}$, costs remain low and grow gently with $T$; at 30 obstacles, mean times are approximately 4, 6, and 8 ms for $T=0.2, 0.4,$ and $0.6\,\mathrm{s}$, respectively, with modest trial‑to‑trial variability. Increasing the clearance to $\rho_{0}=0.04\,\mathrm{m}$ raises both the mean and variability, especially at long horizons and high density; at 40 obstacles, mean times increase to roughly 8, 14, and 23 ms for $T=0.2, 0.4,$ and $0.6\,\mathrm{s}$. Relative to a 50 Hz control budget ($\approx20\,\mathrm{ms}$ per cycle), nearly all conditions at $\rho_{0}=0.02\,\mathrm{m}$ remain comfortably real‑time, even as obstacle count grows. The only regime that risks exceeding the budget is the most cluttered configuration at the largest clearance and longest horizon—40 obstacles, $\rho_{0}=0.04\,\mathrm{m}$, $T=0.6\,\mathrm{s}$—where the mean reaches $\sim23\,\mathrm{ms}$ with high variability. Taken together with the success‑rate results, $T \approx 0.4\,\mathrm{s}$ with $\rho_{0} = 0.02\, \mathrm{m}$ appears to provide a favorable balance of anticipatory reasoning, reactive correction, and computational cost in the tested scenarios (\tblref{\ref{tab:benchmarking}} and \ref{tab:benchmarking_solve}).

\begin{table}[t]
\centering
\caption{Hardware validation: sim-to-real dynamic obstacle avoidance (30 trials, 5 obstacles)}
\label{tab:simtoreal}
\resizebox{1\columnwidth}{!}{
\begin{tabular}{cc|cc}
                                                                                                               & \begin{tabular}[c]{@{}c@{}}Wilcoxon (Z)\\ or t-test (T)\end{tabular} & \textsc{sim} & \textsc{real} \\ \hline
\multirow{2}{*}{\begin{tabular}[c]{@{}c@{}}Action cost\\ (rad/s\textsuperscript3)\end{tabular}} & \multirow{2}{*}{T,p}                                                 & 4.65 ± 2.21                   & 4.87 ± 2.31                    \\
                                                                                                               &                                                                      & \multicolumn{2}{c}{0.23, 0.82}                                 \\ \hline
\multirow{2}{*}{\begin{tabular}[c]{@{}c@{}}Min. clearance\\ (mm)\end{tabular}}                                 & \multirow{2}{*}{Z,p}                                                 & 21.91 ± 1.68                  & 21.81 ± 2.10                   \\
                                                                                                               &                                                                      & \multicolumn{2}{c}{0.26, 0.79}                                 \\ \hline
\multirow{2}{*}{\begin{tabular}[c]{@{}c@{}}Time to goal\\ (s)\end{tabular}}                                    & \multirow{2}{*}{Z,p}                                                 & 10.33 ± 1.54                  & 10.24 ± 1.68                   \\
                                                                                                               &                                                                      & \multicolumn{2}{c}{0.04, 0.97}                                 \\ \hline
\multirow{2}{*}{\begin{tabular}[c]{@{}c@{}}Solver time\\ (ms)\end{tabular}}                                    & \multirow{2}{*}{Z,p}                                                 & 0.49 ± 0.04                   & 0.94 ± 0.15                    \\
                                                                                                               &                                                                      & \multicolumn{2}{c}{6.49, \textless 0.001***}    \\ \hline
Success rate                                                                                                   & -                                                                    & 0.97                          & 0.97                          
\end{tabular}}
\end{table}

\section{Hardware Validation}

All hardware experiments were conducted on a Universal Robots UR10e platform. 
Controller hyperparameters were selected according to the simulation study, with horizon $T=0.4$ s, substep resolution $T_s = 0.04$ s, and clearance value $\rho_0 = 0.02$ m. 
The experiments were designed to evaluate sim-to-real consistency, robustness to aggressive obstacle motion, and suitability for high-rate execution on physical hardware. 
Across all experiments, collision checking relied on inflated convex robot geometries, while obstacle states were updated either from simulated trajectories or from motion-capture estimates. To match the update rate of the UR10e, we set the control frequency to 500 Hz and the task-space error gain $k = 0.3 k_{\max}$ (\remaref{\ref{rem:kmax}}).

\begin{figure*}[t]
    \centering
    \includegraphics[width=\linewidth]{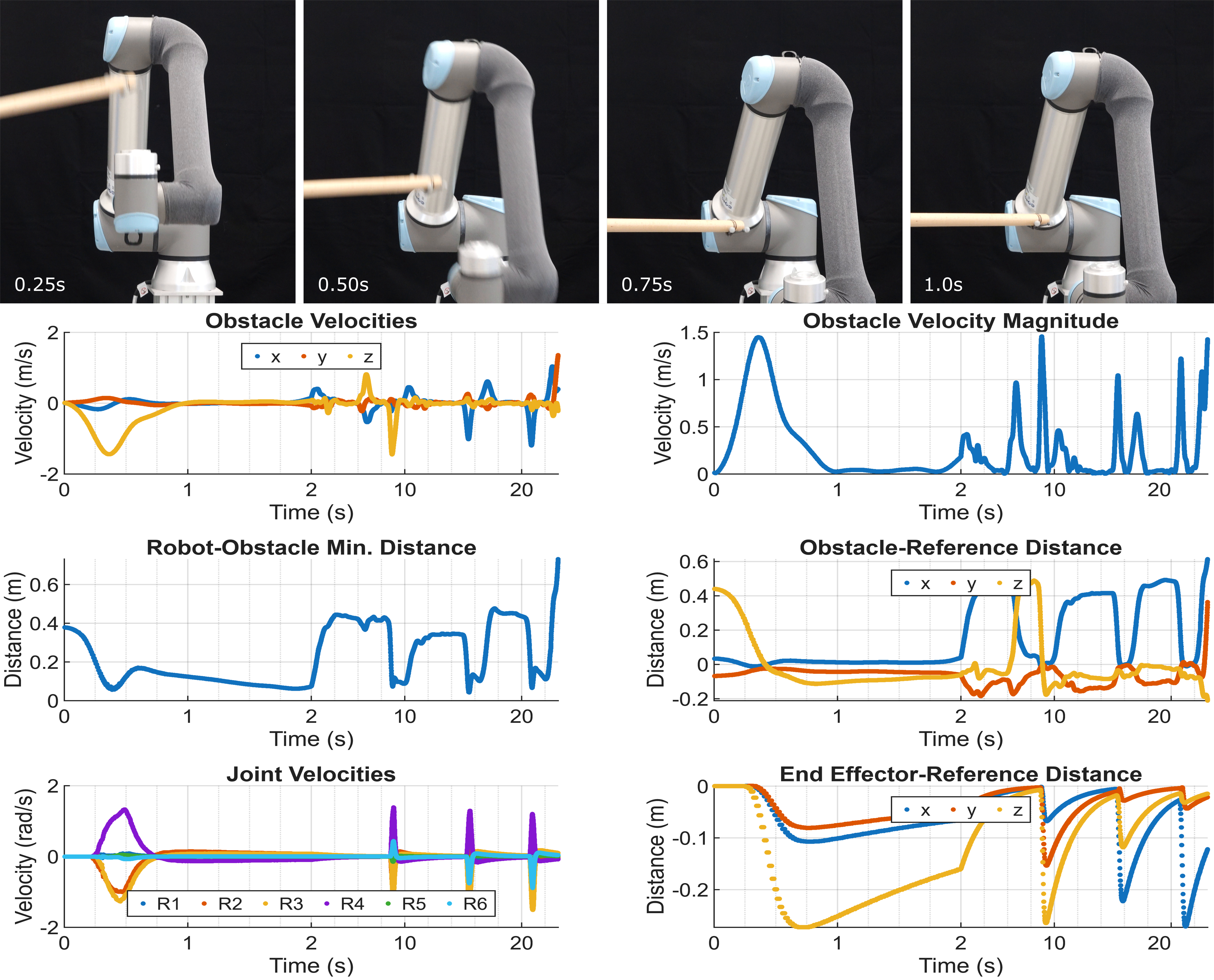}
    \caption{Fast obstacle avoidance with a tracked real moving obstacle, with a few representative sequences. All vectors in the top-left, middle-right, and bottom-right panels are expressed in the robot base frame. The obstacle quickly approaches the robot end-effector from different directions (top left). As the obstacle center approaches the target \textit{reference} end-effector position, i.e., the origin (middle-right), the robot moves away from the target to ensure clearance (bottom-right). As a result, the minimum clearance stays positive (middle left) even with high peaks in approach velocity magnitude (top right), while joint velocities exhibit smooth profiles (bottom left).}
    \label{fig:fast}
\end{figure*}

\subsection{Sim-to-Real Dynamic Obstacle Avoidance}

In this set of experiments, we evaluated the controller using perception-free (simulated) obstacles to isolate the effects of modeling assumptions and control-loop execution, independent of sensing uncertainties. Specifically, we aim to verify that closing the control loop at fast cycle times maintains effectiveness on a real manipulator, even without estimating inertial parameters for the IDS model, as per \assuref{\ref{ass:unit_mass}}. Thus, in addition to measuring the \textit{solver time}, we also consider \textit{time to goal}, \textit{action cost} (defined as the time integral of the norm of joint accelerations, consistent with the minimization objective in \eqref{eq:ocp}), and \textit{minimum clearance} (the smallest distance between the robot and obstacles during successful trials).

In this task, the robot must reach a target position $\overrightarrow{p} \sim U([0.35,0.65]^3)$ from a fixed home configuration, $\overrightarrow{p}_0 = \{0.5, -0.5, 0.5\}$ (all measures in meters and aligned with respect to the robot base frame). The home configuration is fixed in joint angles across all trials. We sample target positions until a geometric rapidly exploring random tree (RRT) planner finds a feasible motion plan \cite{karaman2011anytime}, within a 0.01 m tolerance.

The goal is to reach the target while avoiding collisions with 5 moving obstacles $\overrightarrow{o} \sim U\left([0.2,0.8] \times [0,1] \times [0.2,0.8]\right)$. The obstacles are spheres with a radius of 0.1 m. At least one obstacle is initialized at the end-effector target position, with velocity $\overrightarrow{v} = \{0,-0.1,0\}$, so that no trivial solution exists for the reaching task. The remaining obstacles move with constant linear velocity $\overrightarrow{v} \sim U \left(0 \times [-0.1,-0.4] \times [-0.2,0.2]\right)$ so that the obstacles generally move toward the end-effector.

We compared the performance obtained in simulation with that measured on real hardware, using a Universal Robots UR10e as the test platform. This evaluation uses 30 trials in simulation and 30 trials on hardware.

Statistical results are reported in \tblref{\ref{tab:simtoreal}}. In the case of a given metric, if both samples are normally distributed, we report mean values and standard deviations, with t-statistics and p-values from two-sample t-tests. Otherwise, we report median values and interquartile ranges, with z-statistics and p-values from the Wilcoxon rank-sum test. We consider statistical significance for p-values below the 1\% threshold.

We did not observe significant statistical differences between simulation results and real-world performance of \ours{}, except for the solver time metric, which may be affected by the additional overhead introduced by encapsulating the algorithm within the \textit{ROS-control} framework \cite{chitta2017ros_control}. Importantly, the minimum clearance between the manipulator links and the obstacle shapes is generally consistent with the specified clearance value.

\subsection{Fast Obstacle Avoidance with Real Moving Obstacles}

We further demonstrate real-world fast obstacle avoidance under perception-driven obstacle tracking. An off-the-shelf motion capture suite tracks online the position of four markers rigidly connected to a stick. The computed centroid is interpreted as the obstacle center position of a sphere with a radius of 0.05 m, and its velocity is obtained through Kalman filtering. With respect to the previous experiment, we use the estimates to update the obstacle pose and velocity between control ticks. The estimate is fed at approximately 300 Hz while the controller runs at 500 Hz.

As shown in \figref{\ref{fig:fast}}, the estimated obstacle velocity is significantly higher than in the previous setting, reaching magnitude peaks (top row, right column) of approximately 1.5 m/s while approaching the robot wrists from different angles (top row, left column). The closest-point distance between the sphere and robot convex hulls stays positive (middle left); no physical contact occurred in these experiments, while joint velocities remain smooth even during aggressive avoidance maneuvers (bottom left).

The remaining graphs (middle and bottom rows, right column) show, respectively, the obstacle centroid and end-effector distance from the current robot target position, which is fixed in these experiments. In other words, the robot stands still, moves as needed to ensure clearance with the approaching obstacle, and reaches back the given target once the obstacle is removed.

\section{Head-to-Head Comparison}

In simulation, we performed a controlled head-to-head comparison with a state-of-the-art reactive planner \cite{spahn2023dynamic} and an MPC baseline \cite{heins2025robust}, each of them selected for their proven ability to handle dynamic obstacles with ad-hoc formulations. In both cases, we select the authors' suggested hyperparameters, which are expert-tuned values for \dynfabrics{} \cite{spahn2023autotuning}, and original settings for the MPC baseline with dynamic obstacle avoidance \cite{heins2023keep}.

Concerning \ours{}, we select hyperparameters as in the previous hardware experiments but set $k = 0.5k_{\max}$ (\remaref{\ref{rem:kmax}}). The clearance value $\rho_0 = 0.02 ~\text{m}$ is also used for enforcement of non-penetration constraints in the MPC baseline. A hard timeout of 20 seconds is also considered.

For the head-to-head comparison with the authors' open-source implementations \cite{spahn2023dynamic,heins2025robust}, all methods, including \ours{}, use sphere primitives to approximate robot-link collision geometry. The obstacle-generation procedure, obstacle trajectories, timeout, robot model, success criterion, and post-hoc collision checking protocol are shared across methods; only method-specific hyperparameters follow the respective authors' recommended settings. Collisions detected by the post-hoc checker are counted as collision failures, while trials that do not reach the goal within 20 s are counted as timeouts. We compared performance on a simulated Franka Emika Panda arm at 100 Hz. Except for the platform and home joint configuration, the setup procedure for each trial was identical to that used in the UR10e hardware experiments. Results are shown in \tblref{\ref{tab:simulation}}.

The \dynfabrics{} baseline achieves relatively strong success rates, coupled with fast cycle times. Nevertheless, efficiency—measured as total joint-space path length—remains well below the level achieved through look-ahead search for optimality, as restored in the MPC baseline. In fact, the MPC baseline performs markedly better, though at the expense of a computational cost that is nearly two orders of magnitude higher in terms of solve time. This observation aligns with prior findings in \cite{spahn2023dynamic}. Regarding success rates, our experiments show that \dynfabrics{} scales slightly worse as obstacle density increases, with failures primarily attributable to collisions, as also observed in \cite{spahn2023dynamic}. \ours{} exhibits strong scalability across the tested range of conditions, maintaining efficient solver times while surpassing both baselines in terms of success rates. In terms of path efficiency, \ours{} lies between the reactive and predictive baselines, reflecting the inherent trade-offs in our approach.

\begin{table}[t]
\centering
\caption{Head-to-head comparison results}
\label{tab:simulation}
\resizebox{\columnwidth}{!}{
\begin{tabular}{rc|ccc|cc}
\multicolumn{1}{c}{}                                                                                     & \rotatebox{90}{Obst. (\#)} & \rotatebox{90}{Success}     & \rotatebox{90}{Collision} & \rotatebox{90}{Timeout} & \begin{tabular}[c]{@{}c@{}}Solve \\ time (ms)\end{tabular} & \begin{tabular}[c]{@{}c@{}}Travelled \\ distance (rad)\end{tabular} \\ \hline
\multicolumn{1}{r|}{\multirow{3}{*}{\textit{\begin{tabular}[c]{@{}r@{}}Dynamic\\ Fabrics\end{tabular}}}} & 1          & \textbf{80} & -         & -       & 0.36 ± 0.01                                                & 9.08 ± 1.10                                                         \\
\multicolumn{1}{r|}{}                                                                                    & 3          & 68          & 11        & 1       & 0.54 ± 0.02                                                & 11.55 ± 4.93                                                        \\
\multicolumn{1}{r|}{}                                                                                    & 5          & 56          & 23        & 1       & 0.69 ± 0.02                                                & 14.06 ± 9.19                                                        \\ \hline
\multicolumn{1}{r|}{\multirow{3}{*}{MPC}}                                                                & 1          & 75          & -         & 5       & 19.63 ± 1.06                                               & 6.73 ± 2.76                                                         \\
\multicolumn{1}{r|}{}                                                                                    & 3          & 70          & 7         & 3       & 37.02 ± 1.09                                               & 9.00 ± 2.38                                                         \\
\multicolumn{1}{r|}{}                                                                                    & 5          & 63          & 14        & 3       & 68.03 ± 1.72                                               & 9.48 ± 2.02                                                         \\ \hline
\multicolumn{1}{l|}{\multirow{3}{*}{\ours{}}}                                                    & 1          & \textbf{80} & -         & -       & 0.20 ± 0.01                                                & 8.02 ± 1.34                                                         \\
\multicolumn{1}{l|}{}                                                                                    & 3          & \textbf{80} & -         & -       & 0.21 ± 0.01                                                & 9.06 ± 2.77                                                         \\
\multicolumn{1}{l|}{}                                                                                    & 5          & \textbf{79} & 1         & -       & 0.22 ± 0.01                                                & 9.79 ± 4.11                                                        
\end{tabular}
}
\end{table}

\section{Discussion}
\label{sec:discuss}
While \ours{} achieves fast collision-aware regulation in the tested scenarios, the current implementation does not enforce explicit per-cycle time budgets or hard real-time guarantees. Under extreme combinations of dense clutter, large clearances, and long horizons, solver latency can exceed the control period. A practical extension is a deadline-aware execution model that caps IDS substeps and LCP iterations, degrades gracefully when budgets are tight, and triggers a constant-time fallback, such as velocity damping or braking, upon overrun.

The stability and contact-energy analysis also depends on regularity of both the task Jacobian and the contact problem. Near kinematic singularities, SVF limits amplification but weakens the approximation $JJ^\dagger \approx I$, which may reduce convergence rates. Similarly, multiple nearly dependent active contact normals can make the Delassus matrix ill-conditioned and degrade the LCP solution. In implementation, this condition can be monitored through $\kappa(G)$; excessive ill-conditioning can trigger regularization, reduced IDS iterations, or a conservative braking/velocity-damping policy.

Finally, the experiments indicate that the horizon is not a monotonic performance knob. It trades anticipation through the IDS rollout against responsiveness through the OC layer: short horizons provide limited lookahead, whereas long horizons can dilute the first minimum-acceleration action. This suggests that adaptive-horizon variants may be preferable to a fixed horizon, especially in scenes where obstacle speed, clearance activity, and available computational budget vary over time.
\section{Conclusions}
This paper presented \ours{}, a task-space receding-horizon controller that uses contact-consistent short-horizon rollouts to provide anticipatory collision avoidance without solving a full constrained nonlinear MPC problem online. The controller combines nominal task regulation, contact-consistent terminal-reference generation, and minimum-acceleration receding-horizon execution.

The analysis showed local exponential task-space regulation in contact-inactive regimes and bounded effects of rollout-contact perturbations under regularity assumptions. Empirically, simulations on a 40-DOF multi-chain system showed that intermediate horizons provide the best balance between anticipation, responsiveness, and computational cost. Hardware experiments on a UR10e demonstrated consistent sim-to-real behavior without accurate inertial parameter estimation, and head-to-head comparisons against \dynfabrics{} and an MPC baseline showed higher success rates in dynamic clutter while maintaining solve times close to reactive methods.

Overall, the results suggest that contact-consistent terminal rollouts provide a practical middle ground between purely reactive collision avoidance and full predictive optimization. Future work will focus on deadline-aware execution, adaptive substepping and horizon selection, robust handling of ill-conditioned contact configurations, perception-driven obstacle prediction, and validation with real moving obstacles and multi-arm interaction.
\bibliographystyle{IEEEtran}
\bibliography{include/references}



 





\end{document}